\newtheorem{theorem}{Theorem}
\title{Latent Theory of Mind: A Decentralized Diffusion Architecture for Cooperative Manipulation}
\author{
  Chengyang He\\
  National University of Singapore\\
  Singapore, Republic of Singapore \\
  \texttt{hecy@stanford.edu} \\
  \And
  Gadiel Sznaier Camps \\
  Stanford University \\
  California, United States \\
  \texttt{gsznaier@stanford.edu} \\
  \And
  Xu Liu \\
  Stanford University \\
  California, United States \\
  \texttt{liuxujsw@stanford.edu} \\
  \And
  Mac Schwager \\
  Stanford University \\
  California, United States \\
  \texttt{schwager@stanford.edu} \\
  \And
  Guillaume Sartoretti \\
  National University of Singapore \\
  Singapore, Republic of Singapore \\
  \texttt{guillaume.sartoretti@nus.edu.sg} \\
}
\begin{document}
\maketitle


\begin{abstract}
    We present Latent Theory of Mind (LatentToM), a decentralized diffusion policy architecture for collaborative robot manipulation.
    Our policy allows multiple manipulators with their own perception and computation to collaborate with each other towards a common task goal with or without explicit communication.
    Our key innovation lies in allowing each agent to maintain two latent representations: an \emph{ego} embedding specific to the robot, and a \emph{consensus} embedding trained to be common to both robots, despite their different sensor streams and poses. 
    We further let each robot train a decoder to infer the other robot's ego embedding from their consensus embedding, akin to ``theory of mind'' in latent space.
    Training occurs centrally, with all the policies' consensus encoders supervised by a loss inspired by sheaf theory, a mathematical theory for clustering data on a topological manifold. 
    Specifically, we introduce a first-order cohomology loss to enforce sheaf-consistent alignment of the consensus embeddings.
    To preserve the expressiveness of the consensus embedding, we further propose structural constraints based on theory of mind and a directional consensus mechanism.
    Execution can be fully distributed, requiring no explicit communication between policies.
    In which case, the information is exchanged implicitly through each robot's sensor stream by observing the actions of the other robots and their effects on the scene.
    Alternatively, execution can leverage direct communication to share the robots' consensus embeddings, where the embeddings are shared once during each inference step and are aligned using the sheaf Laplacian. 
    While we tested our method using two manipulators, our approach can naturally be extended to an arbitrary number of agents. 
    In our hardware experiments, LatentToM outperforms a naive decentralized diffusion baseline, and shows comparable performance with a state-of-the-art centralized diffusion policy for bi-manual manipulation.
    Additionally, we show that LatentToM is naturally robust to temporary robot failure or delays, while a centralized policy may fail.
    More information can be found in \url{https://stanfordmsl.github.io/LatentToM/}.
\end{abstract}

\keywords{Cooperative Manipulation, Diffusion Policy, Consensus Learning} 


\section{Introduction}
\label{sec:intro}

Robotic arm manipulation refers to the process by which robotic arms perceive, grasp, move, rotate, or otherwise interact with objects, typically to accomplish precise or complex tasks~\cite{billard2019trends}. 
This technology plays a critical role in a wide range of applications, including industrial automation~\cite{domel2017toward}, warehousing, logistics~\cite{benali2018dual}, medical procedures~\cite{ginoya2021historical}, and agriculture scouting~\cite{guri2024hefty}. 
The growing interest in humanoid robots~\cite{ze2024generalizable,figure2024helix} has further highlighted the importance of advanced manipulation capabilities, especially those involving dexterous and coordinated arm movements.

Recently, the Diffusion Policy~\cite{chi2023diffusion} has attracted the attention of the community as one of the state-of-the-art robotic arm manipulation solutions. 
It excels in generating smooth, multi-modal, and long-horizon trajectories, outperforming many existing methods~\cite{shafiullah2022behavior,mandlekar2021matters,florence2022implicit,gupta2019relay}. 
Since it inherits the strong ability of the diffusion models~\cite{ho2020denoising} to process high-dimensional data, it can achieve decent performance even when dealing with long-horizon dual-arm cooperative tasks.
While centralized frameworks are effective for current dual-arm applications, the lack of multi-agent training data and their inherent fragility, such as poor scalability, difficulty in training, and sensitivity to failure, has motivated the community to explore decentralized alternatives. 
In multi-arm~\cite{ha2020learning} or multi-agent systems~\cite{sartoretti2019primal,he2024social}, a decentralized framework improves system robustness to outside disturbances while providing more flexibility. 
However, achieving coherent and cooperative behavior in a decentralized manner is non-trivial. 
Each robotic arm operates with partial observations and may be subject to domain shifts, making it difficult to maintain global consistency across the robotic arms. Moreover, even a slight inconsistency between agents can easily lead to task failure, especially when performing delicate cooperative tasks. 
Therefore, a key challenge is to design a consensus representation that captures overlapping information across all agents, despite the incompleteness of their local observations. 
Specifically, in dual-arm scenarios, we need to stably and efficiently train two decentralized policies that align partial information while maintaining independence.

To address this challenge, we propose LatentToM, a decentralized diffusion policy, that enables each robotic arm to independently generate motion trajectories while maintaining coordination with others through a shared consensus representation.
We propose a structured separation of observations, encoding and maintaining them independently to form an ego embedding and a consensus embedding for each arm.
By integrating insights from sheaf theory, we further impose consistency constraints on the consensus embedding derived from the observations of each robotic arm. 
Specifically, by minimizing the sheaf 1-cohomology, the two arms are encouraged to produce globally consistent interpretations from the consensus embedding. 
This ensures that both arms develop a unified understanding of key shared states, which is important for decentralized collaborative decision-making.
To achieve this sheaf-theoretic consistency during training, we introduce a sheaf consistency loss as an auxiliary objective~\cite{liao2025sigma}, penalizing discrepancies in the implicit representations of overlapping observations. 
The objective is to reduce coordination errors caused by inconsistent interpretations of the same scene. 
For example, when one arm makes an action, the other arm can synchronously understand this change and make reactive decisions to ensure the completion of the task.
To further ensure that the resulting consensus embedding is not only numerically consistent but also expressive, we incorporate two additional constraints:
\begin{itemize}
    \item Theory of Mind (ToM)-inspired constraint: Each agent is trained to use its own consensus embedding to infer the other’s ego embedding, encouraging the representation to retain rich and discriminative information and preventing it from collapsing into trivial solutions.
    \item Directional consensus mechanism: We guide the lower-confidence embedding to align with the higher-confidence one, thus reducing the risk caused by uncertain representations and maintaining information richness. 
\end{itemize}
For more stable performance, we use a classic bidirectional consistency synchronization operator as sheaf Laplacian. 
This operator adjusts the consensus embeddings of multiple arms via online post-processing to promote consensus. 
The trade-off is that this approach requires the arms to be capable of communication and to perform one round of information exchange before every decision.
Through comprehensive comparisons with the vanilla diffusion policy and its naive decentralized variant, we demonstrate the effectiveness of our approach in achieving consistent, expressive, and collaborative behavior in cooperative multi-arm tasks.
In addition, we integrated our code into the vanilla diffusion policy codebase to provide a training dataset, making it easy for anyone interested in our approach to use and deploy directly.


\section{Related Work}
\label{sec:relatedwork}

\begin{wrapfigure}{r}{0.4\textwidth}
  \centering
  \includegraphics[width=0.4\textwidth]{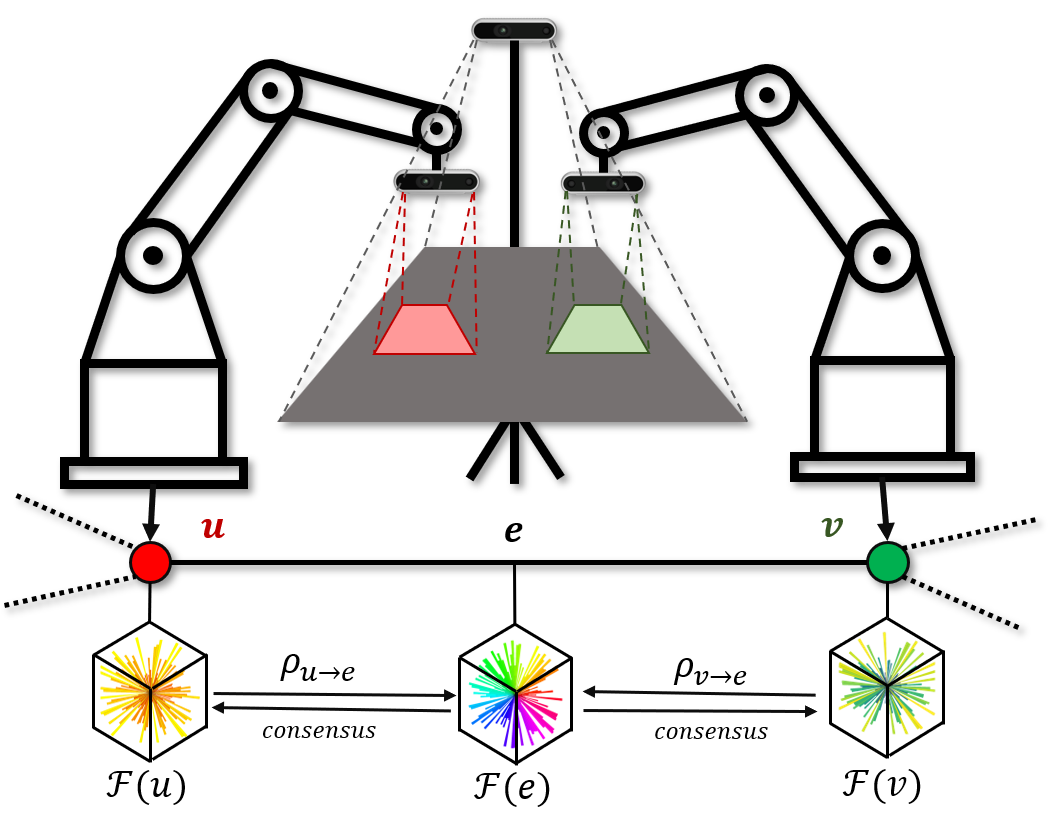}
  \caption{Multi-arm robotic system. In our setup, the system consists of two robotic arms, each equipped with an end-effector camera, represented by the red and green areas indicating their respective fields of view. 
  Additionally, a third-person camera observes the overlapping workspace between the two arms, shown in gray. The bottom part illustrates the consensus embeddings generated using sheaf theory from our collected data.}
  \label{fig:overview}
  \vspace{-12pt}
\end{wrapfigure}

Building upon the success of diffusion models in high-dimensional domains like image and audio generation, diffusion policies have emerged as one of the most advanced visuomotor policies for robotic arm manipulation~\cite{chi2023diffusion}. By modeling robot motion as a conditional denoising diffusion process, they can generate multi-modal, smooth, and long-horizon trajectories for a wide range of tasks. In particular, since they learn from expert demonstrations, they are especially useful for solving complex problems where data collection is costly or difficult to quantify through hand-engineered rewards \cite{zhu2023diffusion,chi2023diffusion}. Several variants, such as 3D diffusion~\cite{ze20243d} and equivariant diffusion policy~\cite{wang2024equivariant,yang2024equibot}, further enhance their data efficiency and generalization ability by leveraging improved input structures and equivariant representation learning, respectively. Others improve their ability to solve long-horizon or multi-task objectives by applying learning techniques \cite{mishra2023generative, ren2024diffusion, wang2024sparse}, using methods such as high-level task planners \cite{ma2024hierarchical}, subproblem decomposition \cite{razmjoo2025ccdp} or trajectory guidance \cite{fan2025diffusion}. Likewise, due to their effectiveness in manipulation tasks, there has also been a push towards improving their precision \cite{wu2024tacdiffusion} and their ability to adapt based on environmental factors \cite{rana2023sayplan, ke20243d}.

Typically, these approaches have focused on controlling up to two agents through a centralized policy. The policy is trained from data collected from a single user controlling both agents \cite{fu2024mobile, chi2024universal} and is usually provided a fused multi-view and action representation \cite{chi2023diffusion,chi2024universal}. 
However, reliance on a centralized framework introduces limitations: training becomes increasingly difficult and system reliability will deteriorate as the scale of the system grows. This challenge is further complicated by the scarcity of multi-arm datasets that include explicit, necessary, and compelling examples of collaboration, as well as the difficulty of collecting such data. This has led to a widely recognized shift in the multi-agent learning community, from centralized to decentralized methods~\cite{oroojlooy2023review,lian2017can}. 
A well-known example is multi-agent planning, where, despite the availability of well-established centralized solvers~\cite{shome2018fast,van2005roadmap,li2022mapf}, research increasingly favors decentralized frameworks due to their superior robustness and scalability in real-world deployments~\cite{ha2020learning,sartoretti2019primal}. 
Recent studies also achieved complex coordination among multi-agent behaviors through diffusion models based on attention mechanisms~\cite{zhu2024madiff}.
A core challenge in decentralized frameworks lies in enabling effective collaboration among individual agents. 
Specifically, agents should reach a consensus, either on global task goals or local coordination states, to execute cooperative behaviors effectively. 
Sheaf theory~\cite{robinson2017sheaves,curry2014sheaves} offers a principled mathematical framework to address this challenge, by providing tools for integrating locally distributed information into a globally consistent structure~\cite{robinson2013understanding}. 
It aligns naturally with decentralized learning paradigms, where agents operate on partial observations or interact with neighbors to coordinate their behaviors~\cite{bodnar2022neural}. 
In our context, we leverage sheaf theory to formalize the global task representation and extract consistent intermediate embeddings across agents, enabling decentralized policies to maintain coherence among local policies and complete complex cooperative tasks.


\section{Method}
\label{sec:ddp}

\subsection{Model Training}
\label{subsec:mt}

The architecture of our decentralized diffusion policy, LatentToM, is designed to accommodate systems comprising of $N$ independent robotic arms, where each arm maintains its own neural network. 
In this section, we illustrate the detailed workings of our LatentToM architecture using a two-arm robotic system as an example. 
As depicted in Fig~\ref{fig:overview}, each robotic arm is equipped with local sensing capabilities, including an end-effector-mounted camera and pose sensors, and shares global scene information captured by a fixed third-person view camera. 
We define the dual-arm system as a graph $\mathcal{G}=(\mathcal{V},\mathcal{E})$, where each robotic arm is represented as a node $u,v\in\mathcal{V}$, and an edge exists between two nodes if their respective workspaces and tasks overlap. 
Each node $u/v$ has an observation space $o_{u/v}$, which we explicitly divide into two subspaces: $o_u=[o_u^{ego},o_u^{con}]$.
Here, $o_u^{ego}$ consists of the arm’s end-effector image and pose, while $o_u^{con}$ corresponds to the third-person view.
We employ two visual encoders~\cite{chi2023diffusion}, $\phi_u^{ego}$ and $\phi_u^{con}$, to process the private and shared observations separately.
This results in a complete embedding $h_u=[h_u^{ego},h_u^{con}]$, where $h_u^{ego}=\phi_u^{ego}(o_u^{ego})$ and $h_u^{con}=\phi_u^{con}(o_u^{con})$.

Next, we formalize the dual-arm system as a cellular sheaf\footnote{For a more in detailed explanation on Sheaf theory, we refer the readers to \cite{bredon2012sheaf}.} $\mathcal{F}$ defined over the graph $\mathcal{G}$.
Specifically, the sheaf $\mathcal{F}$ assigns a vector space to each node $u\in\mathcal{V}$ as: $\mathcal{F}(u)={h_u^{con}}\subseteq\mathbb{R}^{d_{con}}$. 
For each edge $e=(u,v)\in\mathcal{E}$, we define the corresponding restriction map as:
\begin{equation}
    \begin{aligned}
        \rho_{u\rightarrow e}:\mathcal{F}(u)\rightarrow\mathcal{F}(e),~\rho_{v\rightarrow e}:\mathcal{F}(v)\rightarrow\mathcal{F}(e)
    \end{aligned}
\end{equation}
where $\mathcal{F}(e)$ denotes the overlapping subspace associated with edge $e$.
In our context, we assume that the overlapping subspace is aligned with the consensus embedding space, i.e., $\mathcal{F}(e)=\mathbb{R}^{d_{con}}$. 
To achieve numerical consistency between nodes in the consensus embedding space, we minimize a loss function derived from the first-order cohomology defined in sheaf theory, which is:
\begin{equation}\label{eq:mse}
    \begin{aligned}
        \mathcal{L}_{nc}=\sum_{e=(u,v)\in\mathcal{E}}||\rho_{u\rightarrow e}(h_u^{con})-\rho_{v\rightarrow e}(h_v^{con})||_2^2.
    \end{aligned}
\end{equation}
According to sheaf theory, this loss function measures the first-order cohomology of the sheaf $\mathcal{F}$. 
Minimizing $\mathcal{L}_{nc}$ encourages the system to approach a more coherent state (i.e., approaching a global section) in which the consensus embeddings across all nodes achieve global consistency in the numerical perspective, thereby naturally yielding a form of consensus learning.

Although Equation \ref{eq:mse} enforces strict numerical consistency between nodes (i.e., arms in our context), this constraint alone may lead to representation collapse, where the consensus embedding $h_u^{con}$ (i.e., consensus), degenerates into a constant or overly simplified vector, thus limiting its expressiveness and information capacity.
To preserve the richness and expressiveness of the consensus embedding $h_u^{con}$, we draw inspiration from the concept of ToM, which is the ability of an agent $u$ to internally reason about the states, intentions, and goals of others $v$.
Specifically, we require each arm to use its own consensus embedding $h_u^{con}$ to infer the ego embedding of the other arm $h_v^{ego}$. 
This encourages the consensus embedding to carry sufficient information about the global context, thus preventing trivial or collapsed solutions.
Building upon Equation \ref{eq:mse}, we introduce an additional loss function $\mathcal{L}_{tom}$:
\begin{equation}\label{eq:tom}
    \begin{aligned}
        \mathcal{L}_{tom} = \sum_{(u,v)\in\mathcal{E}}||h_v^{ego} - \psi_{u\rightarrow v}(h_u^{con},h_v^{ego})||_2^2 + (v\leftrightarrow u)
    \end{aligned}
\end{equation}
where $h_v^{ego}$ is the ego embedding of arm $v$. 
$\psi_{u\rightarrow v}:\mathbb{R}^{d_{con}}\rightarrow\mathbb{R}^{d_{ego}}$ defines a cross-agent prediction function (i.e., ToM predictor)\footnote{Implementation details about ToM predictor can be found in Appendix \ref{appx:tom}.}, which is used by node $u$ to predict the ego embedding $\hat{h}^{ego}_{v|u}$ of node $v$.
In the context of sheaf theory, this serves as an explicit structural constraint on the restriction maps $\rho_{u\rightarrow e}$. 
In other words, the consensus embedding at each node must not only satisfy the global consistency required for consensus but also retain enough semantic information to be meaningfully mapped into the ego feature space $\mathcal{F}(v)$ of neighboring nodes, thereby ensuring that global information is preserved in consensus $\mathcal{F}(e)$.

Although Equations \ref{eq:mse} and \ref{eq:tom} constrain the alignment and information prediction consistency between consensus embeddings $h_u^{con}$ and $h_v^{con}$, ensuring that the learned consensus retains sufficient expressiveness, they implicitly assume that the two embeddings have equal representational quality. 
However, in real-world scenarios, certain arms may have higher-quality observations due to factors such as a better field of view, more stable motion, or more task-relevant sensory inputs.
This raises an important question: who should guide the alignment when the quality of consensus embeddings is unequal?
To address this, we introduce a directional consensus mechanism, in which each arm learns a confidence score $c_{u/v}\in [0,1]$ to indicate the reliability of its consensus embedding.\footnote{Implementation details about confidence predictor can be found in Appendix \ref{appx:conf}.} 
This enables a directionally asymmetric alignment, also referred to as one-way consistency, in which the lower-confidence embedding is guided toward the higher-confidence one.
To implement this mechanism, we define a confidence loss $\mathcal{L}_{conf}$ as follows:
\begin{equation}\label{eq:conf}
    \begin{aligned}
        \mathcal{L}_{conf}=&\underbrace{(1+|\Delta c|)}_{\text{difference-weighting}}\cdot [\mathbf{1}_{c_u\geq c_v}\cdot ||h_v^{con} - h_u^{con}||+\mathbf{1}_{c_v\geq c_u}\cdot ||h_u^{con} - h_v^{con}||]\\
        &+ \lambda_{ent}\cdot [\mathcal{H}(c_u)+\mathcal{H}(c_v)],
    \end{aligned}
\end{equation}
with $\Delta c=c_u-c_v$ and $\mathcal{H}(c)=-c\log c-(1-c)\log{(1-c)}$.
The use of the difference-weighting term in Equation \ref{eq:conf} aims to dynamically adjust the penalty on embedding alignment errors based on the confidence difference between two nodes $u$ and $v$. 
In other words, if the confidence scores of the two agents are similar, indicating that they are ``mutually reliable", there is little need to apply a strong penalty during alignment. 
However, if there is a significant confidence gap (e.g., one node is highly reliable while the other is not), then the alignment error should be penalized more heavily, encouraging the less trustworthy node to more closely align with the more reliable one.
In addition, we introduce a confidence entropy term $\lambda_{ent}\cdot [\mathcal{H}(c_u)+\mathcal{H}(c_v)]$ as a soft regularization mechanism to prevent the confidence scores from collapsing into extreme values. 
This not only stabilizes the optimization process during the early stages of training but also enhances the model’s ability to distinguish between reliable and unreliable embeddings over time.
Since the embedding distribution tends to be noisy and unstable at the beginning of training, directly applying one-way consistency based on early confidence differences may lead to unreliable guidance.
By incorporating this entropy term, the initial confidence values are naturally encouraged to remain near 0.5, effectively avoiding premature overconfidence and allowing the model to learn a more robust confidence estimation as training progresses.
In summary, the total auxiliary loss used during training is given by $\mathcal{L}_{tot}=\alpha\mathcal{L}_{nc}+\beta\mathcal{L}_{tom}+\gamma\mathcal{L}_{conf}$, where $\alpha$, $\beta$, and $\gamma$ are hyperparameters that must be carefully tuned to suit different tasks.

\subsection{Model Inference}
The training paradigm described in the previous subsection yields a fully decentralized model that operates without any explicit information exchange or communication. 
Each node is able to infer a consensus and coordinate its actions using only its own partial observations.
However, during inference, the diffusion policy model requires a relatively long rollout time to generate and execute an action sequence based on current observations. 
In a decentralized setting, this can lead to problematic behaviors such as mutual avoidance, excessive waiting, or repeated local actions, ultimately resulting in livelocks.
For instance, one arm may take an action that is misinterpreted by the other, leading to an unnecessary reaction. 
This can trigger a series of back-and-forth adjustments between the arms, without ever reaching a clear resolution, resulting in an oscillating or indecisive policy.
This issue arises because, unlike the perfectly synchronized setting during training, we cannot guarantee that the consensus embeddings of the two arms remain fully synchronized during inference. 

To mitigate such decision-making deviations caused by local inconsistencies, we introduce the sheaf Laplacian~\cite{hansen2019learning,wei2021persistent} as an online adjustment mechanism during inference. 
This method provides a lightweight, model-agnostic ``consensus repair" process that does not require modifying the trained model. 
By iteratively updating the consensus embeddings of the two nodes, the sheaf Laplacian gradually brings them closer together, promoting consistency and behavioral stability. 
Specifically, we use a classic bidirectional consistency synchronization operator:
\begin{equation}\label{eq:sl}
    \begin{aligned}
        \begin{bmatrix}
        h_{u,t+1}^{con} \\ h_{v,t+1}^{con}
        \end{bmatrix} = 
        \underbrace{
        \begin{bmatrix}
        1 - \eta & \eta \\
        \eta & 1 - \eta
        \end{bmatrix}}_{\text{consistency operator}}
        \cdot
        \begin{bmatrix}
        h_{u,t}^{con} \\ h_{v,t}^{con}
        \end{bmatrix},
    \end{aligned}
\end{equation}
which is equivalent to performing a low-order sheaf Laplacian step on the two embeddings\footnote{Proof can be found at Appendix \ref{appx:proof}.}.
The final adjusted embeddings $h_{u,T}^{con}$, $h_{v,T}^{con}$ are then used for downstream decision-making.
It is worth noting that if the sheaf Laplacian method described in Equation \ref{eq:sl} is used for online consensus adjustment, a one-step information exchange (communication) between the two nodes is required.


\section{Experimental Results}
\label{sec:result}

\subsection{Experiment Setting}
\begin{figure}[t]
\centering
\includegraphics[width=4.3in]{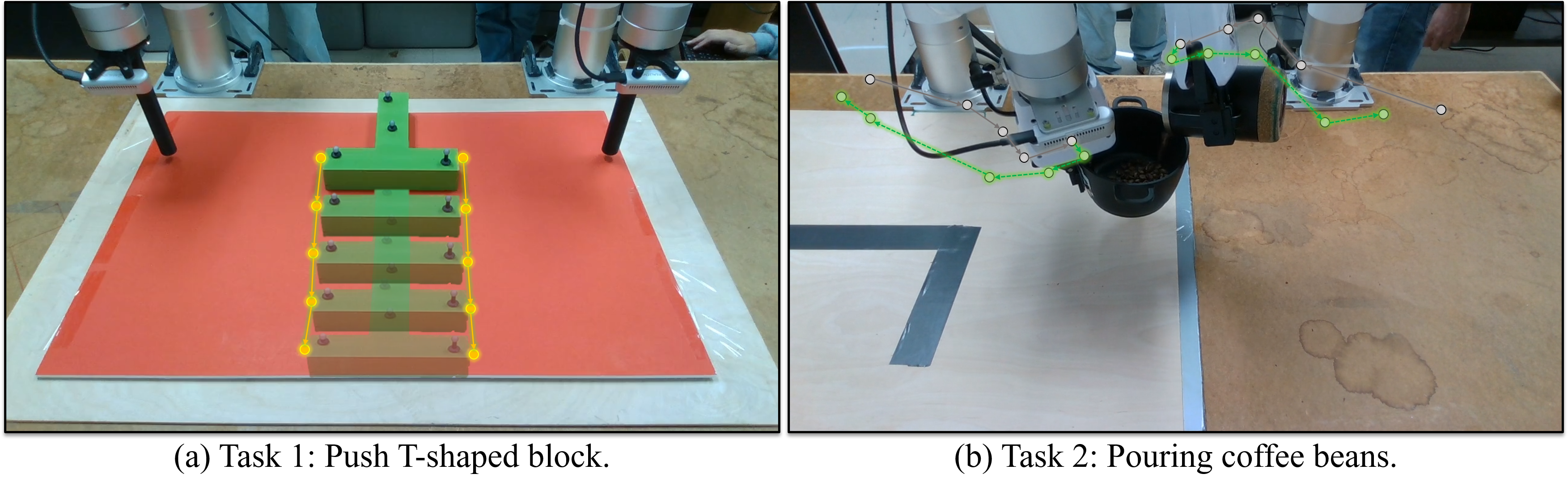}
\vspace{-0.3cm}
\caption{Cooperative Manipulation Tasks. In (a), the yellow dots and lines represent the ideal trajectory of the T-shaped block, with the expectation that its orientation remains largely unchanged throughout the motion. In (b), the gray dots and solid lines indicate the past trajectories of the two arms, while the green dots and dashed lines depict their future planned trajectories.}
\label{fig:tasks}
\vspace{-0.4cm}
\end{figure}

We define a cooperative task as one that requires the collaboration of both arms to be successfully completed. 
In such tasks, the actions of the two arms are interdependent, and the task cannot be accomplished through independent behavior from either arm alone. 
Based on this definition, we design two cooperative tasks, illustrated in Fig~\ref{fig:tasks}.
Task 1: Push-T. 
Unlike the traditional Push-T task, which only requires pushing the T-shaped block to a target location, our version imposes an additional constraint: the block is expected to maintain its initial orientation throughout the movement. 
Specifically, we require the T block to remain parallel to its initial posture during the entire trajectory, including at the final target position. 
This demands precise coordination between the two arms to apply force symmetrically and prevent rotation.
Task 2: Pouring coffee beans. 
In this task, the two arms start from fixed positions, collaboratively pour coffee beans from a cup into a small pot, and then return to a safe resting position. 
Since the task does not have a fixed target pose for the arms, success depends on the arms correctly interpreting each other’s intentions in real time. 
A misalignment in timing or trajectory can result in spillage, making the task a clear example of action interdependence and requiring tightly coupled coordination.

\subsection{Results}

\begin{wrapfigure}{r}{0.43\textwidth} 
  \vspace{-1.7cm}
  \vspace{-0.5\baselineskip}         
  \centering
  \setlength{\abovecaptionskip}{2pt}
  \setlength{\belowcaptionskip}{2pt}
  \begin{minipage}[t]{0.32\linewidth}
    \centering
    \includegraphics[width=\linewidth]{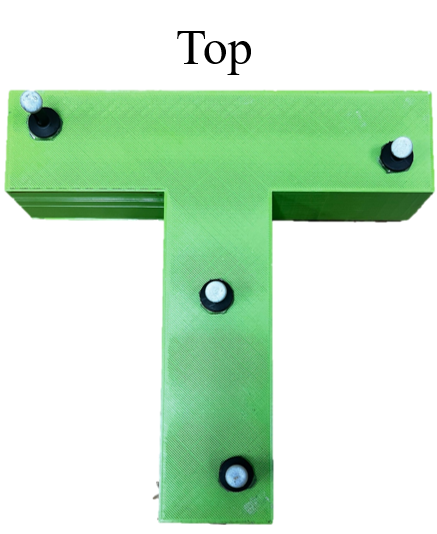}
    \subcaption{}
     \label{fig:tblock_top}  
  \end{minipage}%
  \hfill%
  \begin{minipage}[t]{0.32\linewidth}
    \centering
    \includegraphics[width=\linewidth]{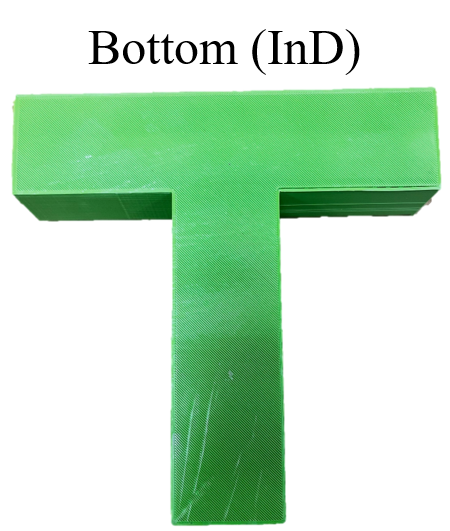}
    \subcaption{}
    \label{fig:tblock_botInD}
  \end{minipage}%
  \hfill%
  \begin{minipage}[t]{0.32\linewidth}
    \centering
    \includegraphics[width=\linewidth]{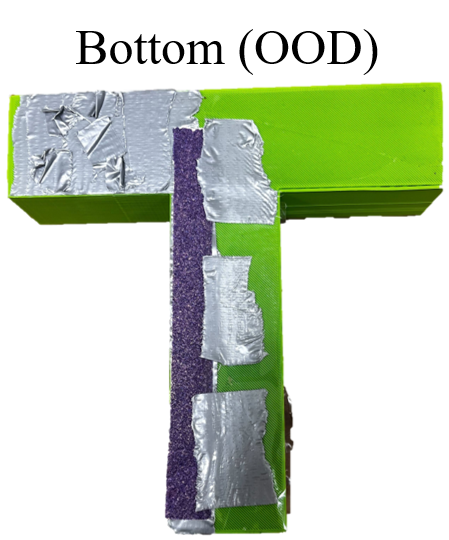}
    \subcaption{}
    \label{fig:tblock_botOOD}
  \end{minipage}
  \caption{T block in the fully InD (\ref{fig:tblock_top},\ref{fig:tblock_botInD}) and partially OOD (asymmetric friction coefficients, \ref{fig:tblock_botOOD}) cases.}
  \label{fig:task1_setup}
  \vspace{-0.4cm}
\end{wrapfigure}
In this section, we conduct comparative experiments using five methods across Tasks 1 and 2: (1) Centralized Diffusion Policy (i.e., vanilla diffusion policy); (2) Naive Decentralized Diffusion Policy (where we only use its own end-effector and third-view cameras during training for each arm’s policy); (3) Naive Consensus based Decentralized Diffusion Policy (without structural separation of ego and consensus embeddings, and lacking ToM constraints and directional consensus); (4) LatentToM without communication; and (5) LatentToM with Sheaf Laplacian (to achieve more stable performance at the cost of one communication step).

\subsubsection{Task~1: Push T-shaped Block}

In Task 1, we explored each method's ability to collaborate when confronted with a T-block that is visually in distribution (InD) (see Fig~\ref{fig:tblock_top}) but has out-of-distribution (OOD) asymmetric dynamics (see Fig~\ref{fig:tblock_botOOD}). More specifically, the underside of the block was modified so that each side would have distinct coefficients of friction, thereby exacerbating any error due to poor coordination. The qualitative results for each approach are shown in Fig~\ref{fig:result_task1_}. As expected, the Centralized Diffusion Policy (CDP), with its complete information, is able to account for the mismatch in dynamics and complete the task near-perfectly. Whereas, the Naive Decentralized Diffusion Policy (NDDP) fails to complete the task with a large rotational error. This is likely due to the small positional errors normally observed during deployment being amplified by the different and unique coefficients of friction used on each side. Similarly, we observe that the Naive Consensus based Decentralized Diffusion Policy (NCDDP) also fails to complete the task. Although NCDDP is trained to produce a shared consensus embedding, it is not guaranteed that this embedding is informative. Therefore, we suspect that it was not enough to capture the change in dynamics. 

\begin{figure}[t]
\centering
\includegraphics[width=4.8in]{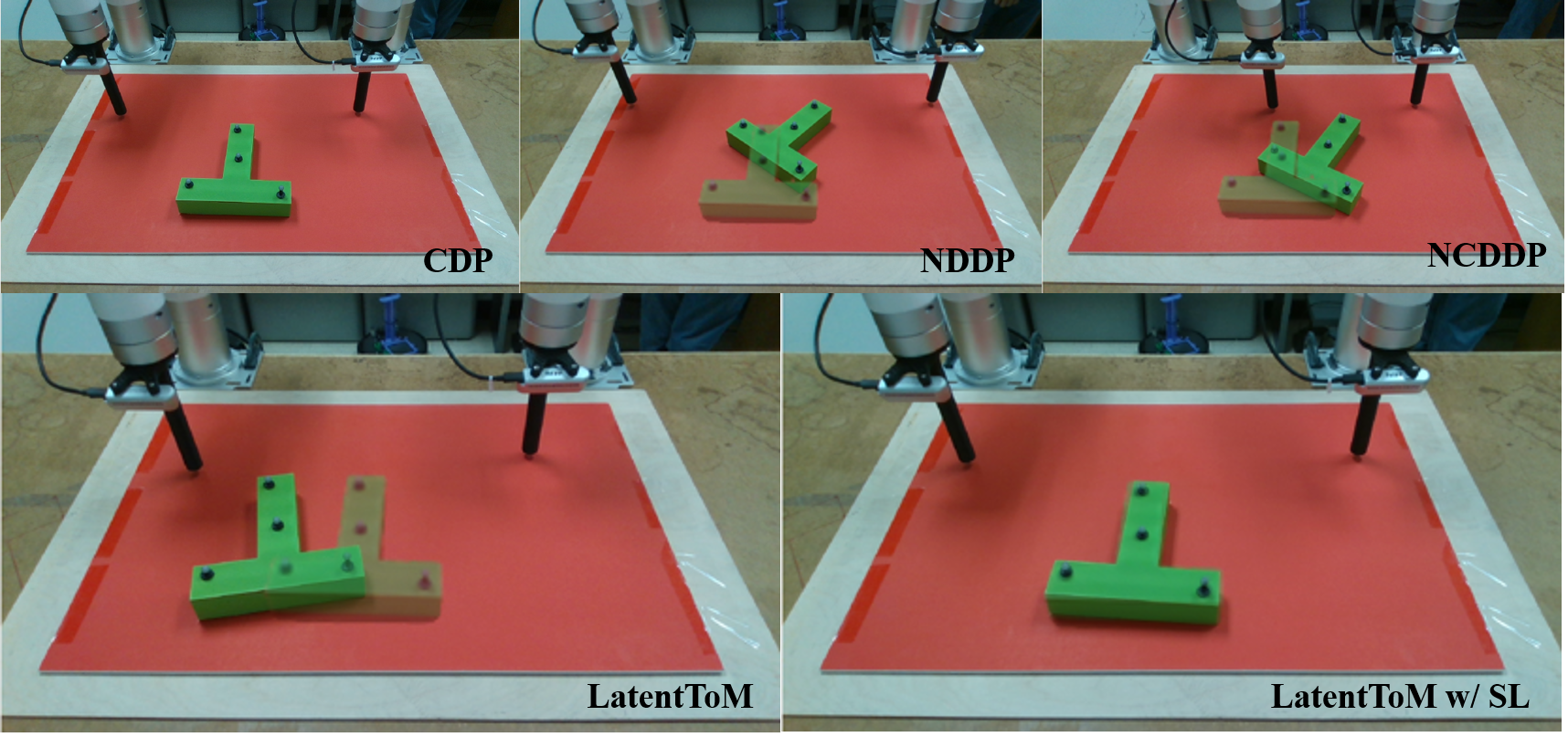}
\vspace{-0.2cm}
\caption{Rollout results of different methods under unbalanced friction setup as shown in Fig~\ref{fig:task1_setup}. The transparent image shows the final position pushed by CDP, which we treat as the expected position for comparing the outcomes of other methods.}
\label{fig:result_task1_}
\vspace{-0.4cm}
\end{figure}

Unlike the other two decentralized methods, our methods LatentToM and LatentToM with sheaf Laplacian are able to partially and fully complete the task respectively. We believe that our use of Theory of mind and a 
directional consensus mechanism ensures that the learned consensus embedding will be informative, therefore allowing the agents to react to environmental changes, while the sheaf Laplacian further helps mitigate the effect of noise, thereby allowing the agents to account for OOD changes and reach high levels of consensual collaboration.

\subsubsection{Task~2: Pouring Coffee Beans}

\begin{wraptable}{r}{0.4\textwidth}  
  \vspace{-0.9cm}
  \centering
  \vspace{-0.3cm}
  \caption{Pouring coffee beans results.}
  \vspace{0.1cm}
  \scalebox{0.7}{
    \begin{tabular}{c|ccccc}
    \toprule              
    \textbf{Methods} & \textbf{FS}   & \textbf{NR}   & \textbf{SO} & \textbf{AC}  & \textbf{CF}  \\
    \midrule\midrule                                 
    CDP        & 15  & 0   &  0  &  0  &  0 \\
    \midrule                              
    NDDP       &  7  & 0   &  0  &  0  &  8 \\
    \midrule                             
    NCDDP      &  9  & 0   &  1  &  1  &  4  \\
    \midrule 
    LatentToM & 13  & 2   &  0  &  0  &  0 \\
    \midrule 
    LatentToM w/ SL  & \textbf{14}  & \textbf{0}   &  \textbf{1}  &  \textbf{0}  &  \textbf{0}  \\
    \midrule 
    \bottomrule
\end{tabular}
  }
  \label{tab:result_task2}
  \vspace{-0.3cm}
\end{wraptable}
    
In addition to reporting the overall success rate, we provide a more fine-grained analysis of outcomes in the coffee bean pouring task. As shown in Table~\ref{tab:result_task2}, we classify each trial into one of five mutually exclusive outcome types:
(1) Fully Successful (FS): The cup is aligned correctly and all coffee beans are poured into the pot without spillage.
(2) Clear Failure (CF): A large amount of beans is spilled due to significant misalignment between the arms.
(3) No Return (NR): At least one arm fails to return to the resting position after pouring, indicating delayed or incomplete execution.
(4) Spill Out (SO): A minor spill occurs (typically in NR cases), but the main pouring is largely aligned.
(5) Arm Collision (AC): The two arms collide during the task, indicating poor spatial coordination.
We evaluated five methods over $15$ independent rollouts each. As expected, Centralized Diffusion Policy (CDP) is able to complete the pour task without any errors. Meanwhile, the Naive Decentralized Diffusion Policy (NDDP) performed the worst, with only $7$ fully successful cases and $8$ clear failures, demonstrating frequent miscoordination.
In contrast, the Naive Consensus based Decentralized Diffusion Policy (NCDDP) reduced failure cases by half, with only $4$ clear failures and $1$ collision, but still lacked robust recovery mechanisms.
We believe this result provides preliminary evidence that well-maintained consensus can lead to effective performance improvements.
Our method, LatentToM w/ Sheaf Laplacian (SL), achieved $14$ fully successful trials out of $15$, with only $1$ minor failure caused by delayed arm retraction due to bean dynamics rather than coordination error.
Even the LatentToM model without any communication performed reliably, with $13$ full successes and only $2$ cases of minor non-return behavior, both of which did not lead to any collisions or major spillage.
Notably, our method outperformed NDDP baseline by $46.7\%$ in full success rate, considering only fully successful (FS) cases as successes and treating all other outcomes as failures.
These results confirm that structured coordination mechanisms not only improve task success, but also eliminate subtle failure modes that naive methods cannot resolve.
\begin{figure}[t]
\centering
\includegraphics[width=5.4in]{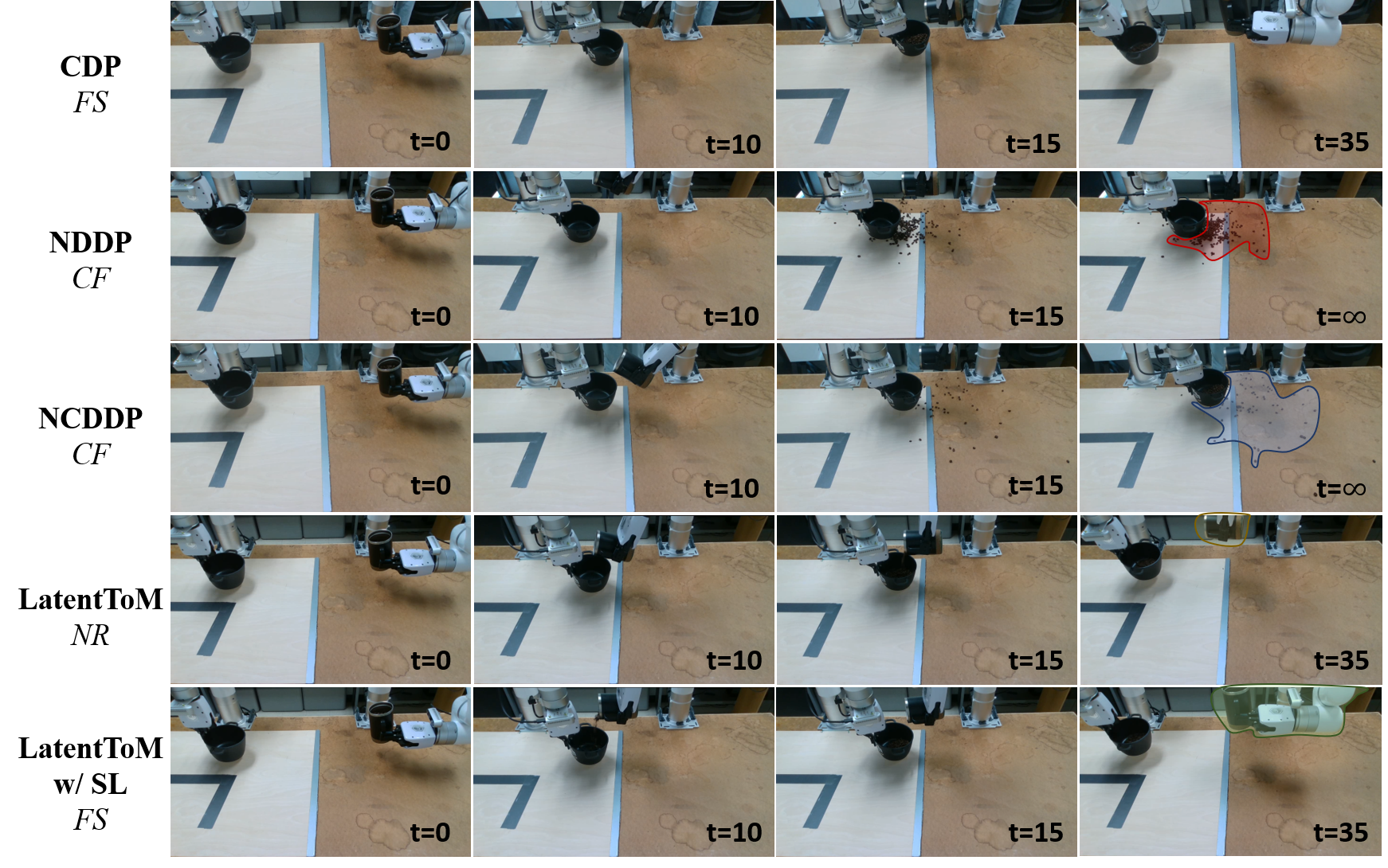}
\vspace{-0.3cm}
\caption{ Representative rollout results for Task 2 (coffee bean pouring). As shown in the figure, although both NNDP and NCDDP resulted in failures, NNDP clearly spilled more coffee beans than NCDDP (red area vs. blue area). For our method, Latent w/o SL exhibits an issue where, after completing the task, the arm may holding the cup fails to return to the resting position and remains in a risky posture, potentially causing additional spillage (highlighted in the yellow area). In contrast, DDP w/ SL successfully returns to a safe resting position, as indicated by the green area. However, for the same fully successful (FS) cases, CDP often completes the task more quickly.}
\label{fig:result_task2}
\vspace{-0.4cm}
\end{figure}


\section{Conclusion}
\label{sec:conclusion}
In this work, we propose a decentralized diffusion policy architecture called LatentToM to deal with multi-arm cooperation manipulation tasks.
Our key innovation lies in enabling each robot to explicitly maintain two distinct latent representations: an ego embedding, which encodes robot-specific information, and a consensus embedding, trained to capture shared, scene-level information across robots.
By leveraging the 1-cohomology from sheaf theory to guide consensus representation learning and introducing structural constraints inspired by Theory of Mind and a directional consensus mechanism, our method achieves both consistency and expressiveness in decentralized coordination.
Our experimental results demonstrate that LatentToM achieves competitive cooperative performance, matching state-of-the-art centralized baselines while significantly outperforming naive decentralized approaches. 
We further proposed to use the sheaf Laplacian as an optional online adjustment method to further enhance stability during inference without requiring model modifications.
We believe our approach offers an effective pathway for scaling diffusion-based control policies to larger and more complex multi-robot systems, paving the way for future research into robust and scalable decentralized collaboration frameworks.


\clearpage


\section{Limitations}

In this work, due to physical and computational hardware constraints, our current experiments were conducted with two robotic arms only.
While our approach is theoretically extensible to larger-scale multiarm systems, we have not yet evaluated its scalability beyond two agents in real-world settings.
Scaling up would likely require better model performance, but also pose significant challenges in data collection and system integration.
In future work, given adequate hardware support, we plan to design tasks involving highly homogeneous but larger-scale multiarm systems, such as object-passing scenarios among multiple arms.
These tasks can potentially be trained using data from smaller subsystems and then fine-tuned for deployment in larger setups, offering a scalable training-to-deployment pipeline.
Additionally, our current method relies on a fixed third-person camera to generate shared observations to construct consensus embeddings. 
In environments with occlusions or limited camera coverage, the quality of the consensus embedding may degrade, potentially impacting coordination.
Future work may also explore fusing multiple third-person views or developing view-invariant representations to improve robustness in such scenarios.


\bibliography{reference}  

\begin{thebibliography}{45}
\providecommand{\natexlab}[1]{#1}
\providecommand{\url}[1]{\texttt{#1}}
\expandafter\ifx\csname urlstyle\endcsname\relax
  \providecommand{\doi}[1]{doi: #1}\else
  \providecommand{\doi}{doi: \begingroup \urlstyle{rm}\Url}\fi

\bibitem[Billard and Kragic(2019)]{billard2019trends}
A.~Billard and D.~Kragic.
\newblock Trends and challenges in robot manipulation.
\newblock \emph{Science}, 364\penalty0 (6446):\penalty0 eaat8414, 2019.

\bibitem[D{\"o}mel et~al.(2017)D{\"o}mel, Kriegel, Ka{\ss}ecker, Brucker, Bodenm{\"u}ller, and Suppa]{domel2017toward}
A.~D{\"o}mel, S.~Kriegel, M.~Ka{\ss}ecker, M.~Brucker, T.~Bodenm{\"u}ller, and M.~Suppa.
\newblock Toward fully autonomous mobile manipulation for industrial environments.
\newblock \emph{International Journal of Advanced Robotic Systems}, 14\penalty0 (4):\penalty0 1729881417718588, 2017.

\bibitem[Benali et~al.(2018)Benali, Breth{\'e}, Gu{\'e}rin, and Gorka]{benali2018dual}
K.~Benali, J.-F. Breth{\'e}, F.~Gu{\'e}rin, and M.~Gorka.
\newblock Dual arm robot manipulator for grasping boxes of different dimensions in a logistics warehouse.
\newblock In \emph{2018 IEEE International Conference on Industrial Technology (ICIT)}, pages 147--152. IEEE, 2018.

\bibitem[Ginoya et~al.(2021)Ginoya, Maddahi, and Zareinia]{ginoya2021historical}
T.~Ginoya, Y.~Maddahi, and K.~Zareinia.
\newblock A historical review of medical robotic platforms.
\newblock \emph{Journal of Robotics}, 2021\penalty0 (1):\penalty0 6640031, 2021.

\bibitem[Guri et~al.(2024)Guri, Lee, Kroemer, and Kantor]{guri2024hefty}
D.~Guri, M.~Lee, O.~Kroemer, and G.~Kantor.
\newblock Hefty: A modular reconfigurable robot for advancing robot manipulation in agriculture.
\newblock \emph{arXiv preprint arXiv:2402.18710}, 2024.

\bibitem[Ze et~al.(2024)Ze, Chen, Wang, Chen, He, Yuan, Peng, and Wu]{ze2024generalizable}
Y.~Ze, Z.~Chen, W.~Wang, T.~Chen, X.~He, Y.~Yuan, X.~B. Peng, and J.~Wu.
\newblock Generalizable humanoid manipulation with improved 3d diffusion policies.
\newblock \emph{arXiv preprint arXiv:2410.10803}, 2024.

\bibitem[AI(2025)]{figure2024helix}
F.~AI.
\newblock Helix: A vision-language-action model for generalist humanoid control.
\newblock \url{https://www.figure.ai/news/helix}, 2025.
\newblock Accessed: 2025-02-20.

\bibitem[Chi et~al.(2023)Chi, Xu, Feng, Cousineau, Du, Burchfiel, Tedrake, and Song]{chi2023diffusion}
C.~Chi, Z.~Xu, S.~Feng, E.~Cousineau, Y.~Du, B.~Burchfiel, R.~Tedrake, and S.~Song.
\newblock Diffusion policy: Visuomotor policy learning via action diffusion.
\newblock \emph{The International Journal of Robotics Research}, page 02783649241273668, 2023.

\bibitem[Shafiullah et~al.(2022)Shafiullah, Cui, Altanzaya, and Pinto]{shafiullah2022behavior}
N.~M. Shafiullah, Z.~Cui, A.~A. Altanzaya, and L.~Pinto.
\newblock Behavior transformers: Cloning $ k $ modes with one stone.
\newblock \emph{Advances in neural information processing systems}, 35:\penalty0 22955--22968, 2022.

\bibitem[Mandlekar et~al.(2021)Mandlekar, Xu, Wong, Nasiriany, Wang, Kulkarni, Fei-Fei, Savarese, Zhu, and Mart{\'\i}n-Mart{\'\i}n]{mandlekar2021matters}
A.~Mandlekar, D.~Xu, J.~Wong, S.~Nasiriany, C.~Wang, R.~Kulkarni, L.~Fei-Fei, S.~Savarese, Y.~Zhu, and R.~Mart{\'\i}n-Mart{\'\i}n.
\newblock What matters in learning from offline human demonstrations for robot manipulation.
\newblock \emph{arXiv preprint arXiv:2108.03298}, 2021.

\bibitem[Florence et~al.(2022)Florence, Lynch, Zeng, Ramirez, Wahid, Downs, Wong, Lee, Mordatch, and Tompson]{florence2022implicit}
P.~Florence, C.~Lynch, A.~Zeng, O.~A. Ramirez, A.~Wahid, L.~Downs, A.~Wong, J.~Lee, I.~Mordatch, and J.~Tompson.
\newblock Implicit behavioral cloning.
\newblock In \emph{Conference on robot learning}, pages 158--168. PMLR, 2022.

\bibitem[Gupta et~al.(2019)Gupta, Kumar, Lynch, Levine, and Hausman]{gupta2019relay}
A.~Gupta, V.~Kumar, C.~Lynch, S.~Levine, and K.~Hausman.
\newblock Relay policy learning: Solving long-horizon tasks via imitation and reinforcement learning.
\newblock \emph{arXiv preprint arXiv:1910.11956}, 2019.

\bibitem[Ho et~al.(2020)Ho, Jain, and Abbeel]{ho2020denoising}
J.~Ho, A.~Jain, and P.~Abbeel.
\newblock Denoising diffusion probabilistic models.
\newblock \emph{Advances in neural information processing systems}, 33:\penalty0 6840--6851, 2020.

\bibitem[Ha et~al.(2020)Ha, Xu, and Song]{ha2020learning}
H.~Ha, J.~Xu, and S.~Song.
\newblock Learning a decentralized multi-arm motion planner.
\newblock \emph{arXiv preprint arXiv:2011.02608}, 2020.

\bibitem[Sartoretti et~al.(2019)Sartoretti, Kerr, Shi, Wagner, Kumar, Koenig, and Choset]{sartoretti2019primal}
G.~Sartoretti, J.~Kerr, Y.~Shi, G.~Wagner, T.~S. Kumar, S.~Koenig, and H.~Choset.
\newblock Primal: Pathfinding via reinforcement and imitation multi-agent learning.
\newblock \emph{IEEE Robotics and Automation Letters}, 4\penalty0 (3):\penalty0 2378--2385, 2019.

\bibitem[He et~al.(2024)He, Duhan, Tulsyan, Kim, and Sartoretti]{he2024social}
C.~He, T.~Duhan, P.~Tulsyan, P.~Kim, and G.~Sartoretti.
\newblock Social behavior as a key to learning-based multi-agent pathfinding dilemmas.
\newblock \emph{arXiv preprint arXiv:2408.03063}, 2024.

\bibitem[Liao et~al.(2025)Liao, Xia, Cao, Dai, He, Wu, and Sartoretti]{liao2025sigma}
S.~Liao, W.~Xia, Y.~Cao, W.~Dai, C.~He, W.~Wu, and G.~Sartoretti.
\newblock Sigma: Sheaf-informed geometric multi-agent pathfinding.
\newblock \emph{arXiv preprint arXiv:2502.06440}, 2025.

\bibitem[Zhu et~al.(2023)Zhu, Zhao, He, Zhong, Zhang, Guo, Chen, and Zhang]{zhu2023diffusion}
Z.~Zhu, H.~Zhao, H.~He, Y.~Zhong, S.~Zhang, H.~Guo, T.~Chen, and W.~Zhang.
\newblock Diffusion models for reinforcement learning: A survey.
\newblock \emph{arXiv preprint arXiv:2311.01223}, 2023.

\bibitem[Ze et~al.(2024)Ze, Zhang, Zhang, Hu, Wang, and Xu]{ze20243d}
Y.~Ze, G.~Zhang, K.~Zhang, C.~Hu, M.~Wang, and H.~Xu.
\newblock 3d diffusion policy.
\newblock \emph{arXiv e-prints}, pages arXiv--2403, 2024.

\bibitem[Wang et~al.(2024)Wang, Hart, Surovik, Kelestemur, Huang, Zhao, Yeatman, Wang, Walters, and Platt]{wang2024equivariant}
D.~Wang, S.~Hart, D.~Surovik, T.~Kelestemur, H.~Huang, H.~Zhao, M.~Yeatman, J.~Wang, R.~Walters, and R.~Platt.
\newblock Equivariant diffusion policy.
\newblock \emph{arXiv preprint arXiv:2407.01812}, 2024.

\bibitem[Yang et~al.(2024)Yang, Cao, Deng, Antonova, Song, and Bohg]{yang2024equibot}
J.~Yang, Z.-a. Cao, C.~Deng, R.~Antonova, S.~Song, and J.~Bohg.
\newblock Equibot: Sim (3)-equivariant diffusion policy for generalizable and data efficient learning.
\newblock \emph{arXiv preprint arXiv:2407.01479}, 2024.

\bibitem[Mishra et~al.(2023)Mishra, Xue, Chen, and Xu]{mishra2023generative}
U.~A. Mishra, S.~Xue, Y.~Chen, and D.~Xu.
\newblock Generative skill chaining: Long-horizon skill planning with diffusion models.
\newblock In \emph{Conference on Robot Learning}, pages 2905--2925. PMLR, 2023.

\bibitem[Ren et~al.(2024)Ren, Lidard, Ankile, Simeonov, Agrawal, Majumdar, Burchfiel, Dai, and Simchowitz]{ren2024diffusion}
A.~Z. Ren, J.~Lidard, L.~L. Ankile, A.~Simeonov, P.~Agrawal, A.~Majumdar, B.~Burchfiel, H.~Dai, and M.~Simchowitz.
\newblock Diffusion policy policy optimization.
\newblock \emph{arXiv preprint arXiv:2409.00588}, 2024.

\bibitem[Wang et~al.(2024)Wang, Zhang, Huo, Tian, Zhang, Xie, Xu, Ji, Zhan, Ding, et~al.]{wang2024sparse}
Y.~Wang, Y.~Zhang, M.~Huo, R.~Tian, X.~Zhang, Y.~Xie, C.~Xu, P.~Ji, W.~Zhan, M.~Ding, et~al.
\newblock Sparse diffusion policy: A sparse, reusable, and flexible policy for robot learning.
\newblock \emph{arXiv preprint arXiv:2407.01531}, 2024.

\bibitem[Ma et~al.(2024)Ma, Patidar, Haughton, and James]{ma2024hierarchical}
X.~Ma, S.~Patidar, I.~Haughton, and S.~James.
\newblock Hierarchical diffusion policy for kinematics-aware multi-task robotic manipulation.
\newblock In \emph{Proceedings of the IEEE/CVF Conference on Computer Vision and Pattern Recognition}, pages 18081--18090, 2024.

\bibitem[Razmjoo et~al.(2025)Razmjoo, Calinon, Gienger, and Zhang]{razmjoo2025ccdp}
A.~Razmjoo, S.~Calinon, M.~Gienger, and F.~Zhang.
\newblock Ccdp: Composition of conditional diffusion policies with guided sampling.
\newblock \emph{arXiv preprint arXiv:2503.15386}, 2025.

\bibitem[Fan et~al.(2025)Fan, Yang, Liu, Wu, Che, Liu, and Wan]{fan2025diffusion}
S.~Fan, Q.~Yang, Y.~Liu, K.~Wu, Z.~Che, Q.~Liu, and M.~Wan.
\newblock Diffusion trajectory-guided policy for long-horizon robot manipulation.
\newblock \emph{arXiv preprint arXiv:2502.10040}, 2025.

\bibitem[Wu et~al.(2024)Wu, Chen, Wu, Chen, Zhang, Bing, Swikir, Haddadin, and Knoll]{wu2024tacdiffusion}
Y.~Wu, Z.~Chen, F.~Wu, L.~Chen, L.~Zhang, Z.~Bing, A.~Swikir, S.~Haddadin, and A.~Knoll.
\newblock Tacdiffusion: Force-domain diffusion policy for precise tactile manipulation.
\newblock \emph{arXiv preprint arXiv:2409.11047}, 2024.

\bibitem[Rana et~al.(2023)Rana, Haviland, Garg, Abou-Chakra, Reid, and Suenderhauf]{rana2023sayplan}
K.~Rana, J.~Haviland, S.~Garg, J.~Abou-Chakra, I.~Reid, and N.~Suenderhauf.
\newblock Sayplan: Grounding large language models using 3d scene graphs for scalable robot task planning.
\newblock \emph{arXiv preprint arXiv:2307.06135}, 2023.

\bibitem[Ke et~al.(2024)Ke, Gkanatsios, and Fragkiadaki]{ke20243d}
T.-W. Ke, N.~Gkanatsios, and K.~Fragkiadaki.
\newblock 3d diffuser actor: Policy diffusion with 3d scene representations.
\newblock \emph{arXiv preprint arXiv:2402.10885}, 2024.

\bibitem[Fu et~al.(2024)Fu, Zhao, and Finn]{fu2024mobile}
Z.~Fu, T.~Z. Zhao, and C.~Finn.
\newblock Mobile aloha: Learning bimanual mobile manipulation with low-cost whole-body teleoperation.
\newblock \emph{arXiv preprint arXiv:2401.02117}, 2024.

\bibitem[Chi et~al.(2024)Chi, Xu, Pan, Cousineau, Burchfiel, Feng, Tedrake, and Song]{chi2024universal}
C.~Chi, Z.~Xu, C.~Pan, E.~Cousineau, B.~Burchfiel, S.~Feng, R.~Tedrake, and S.~Song.
\newblock Universal manipulation interface: In-the-wild robot teaching without in-the-wild robots.
\newblock \emph{arXiv preprint arXiv:2402.10329}, 2024.

\bibitem[Oroojlooy and Hajinezhad(2023)]{oroojlooy2023review}
A.~Oroojlooy and D.~Hajinezhad.
\newblock A review of cooperative multi-agent deep reinforcement learning.
\newblock \emph{Applied Intelligence}, 53\penalty0 (11):\penalty0 13677--13722, 2023.

\bibitem[Lian et~al.(2017)Lian, Zhang, Zhang, Hsieh, Zhang, and Liu]{lian2017can}
X.~Lian, C.~Zhang, H.~Zhang, C.-J. Hsieh, W.~Zhang, and J.~Liu.
\newblock Can decentralized algorithms outperform centralized algorithms? a case study for decentralized parallel stochastic gradient descent.
\newblock \emph{Advances in neural information processing systems}, 30, 2017.

\bibitem[Shome et~al.(2018)Shome, Solovey, Yu, Bekris, and Halperin]{shome2018fast}
R.~Shome, K.~Solovey, J.~Yu, K.~Bekris, and D.~Halperin.
\newblock Fast, high-quality dual-arm rearrangement in synchronous, monotone tabletop setups.
\newblock In \emph{International Workshop on the Algorithmic Foundations of Robotics}, pages 778--795. Springer, 2018.

\bibitem[Van Den~Berg and Overmars(2005)]{van2005roadmap}
J.~P. Van Den~Berg and M.~H. Overmars.
\newblock Roadmap-based motion planning in dynamic environments.
\newblock \emph{IEEE transactions on robotics}, 21\penalty0 (5):\penalty0 885--897, 2005.

\bibitem[Li et~al.(2022)Li, Chen, Harabor, Stuckey, and Koenig]{li2022mapf}
J.~Li, Z.~Chen, D.~Harabor, P.~J. Stuckey, and S.~Koenig.
\newblock Mapf-lns2: Fast repairing for multi-agent path finding via large neighborhood search.
\newblock In \emph{Proceedings of the AAAI Conference on Artificial Intelligence}, volume~36, pages 10256--10265, 2022.

\bibitem[Zhu et~al.(2024)Zhu, Liu, Mao, Kang, Xu, Yu, Ermon, and Zhang]{zhu2024madiff}
Z.~Zhu, M.~Liu, L.~Mao, B.~Kang, M.~Xu, Y.~Yu, S.~Ermon, and W.~Zhang.
\newblock Madiff: Offline multi-agent learning with diffusion models.
\newblock \emph{Advances in Neural Information Processing Systems}, 37:\penalty0 4177--4206, 2024.

\bibitem[Robinson(2017)]{robinson2017sheaves}
M.~Robinson.
\newblock Sheaves are the canonical data structure for sensor integration.
\newblock \emph{Information Fusion}, 36:\penalty0 208--224, 2017.

\bibitem[Curry(2014)]{curry2014sheaves}
J.~M. Curry.
\newblock \emph{Sheaves, cosheaves and applications}.
\newblock University of Pennsylvania, 2014.

\bibitem[Robinson(2013)]{robinson2013understanding}
M.~Robinson.
\newblock Understanding networks and their behaviors using sheaf theory.
\newblock In \emph{2013 IEEE Global Conference on Signal and Information Processing}, pages 911--914. IEEE, 2013.

\bibitem[Bodnar et~al.(2022)Bodnar, Di~Giovanni, Chamberlain, Lio, and Bronstein]{bodnar2022neural}
C.~Bodnar, F.~Di~Giovanni, B.~Chamberlain, P.~Lio, and M.~Bronstein.
\newblock Neural sheaf diffusion: A topological perspective on heterophily and oversmoothing in gnns.
\newblock \emph{Advances in Neural Information Processing Systems}, 35:\penalty0 18527--18541, 2022.

\bibitem[Bredon(2012)]{bredon2012sheaf}
G.~E. Bredon.
\newblock \emph{Sheaf theory}, volume 170.
\newblock Springer Science \& Business Media, 2012.

\bibitem[Hansen and Ghrist(2019)]{hansen2019learning}
J.~Hansen and R.~Ghrist.
\newblock Learning sheaf laplacians from smooth signals.
\newblock In \emph{ICASSP 2019-2019 IEEE International Conference on Acoustics, Speech and Signal Processing (ICASSP)}, pages 5446--5450. IEEE, 2019.

\bibitem[Wei and Wei(2021)]{wei2021persistent}
X.~Wei and G.-W. Wei.
\newblock Persistent sheaf laplacians.
\newblock \emph{arXiv preprint arXiv:2112.10906}, 2021.

\end{thebibliography}

\clearpage
\appendix

\section{Implementation Details}

In this section, we provide detailed implementation details about the ToM Predictor and Confidence Predictor introduced in Section \ref{subsec:mt}, where the main focus is the neural network architectures employed in our experiments. 
While these architectures are used in our current setup, they are not mandatory and can be reasonably adapted or improved in future research.

\subsection{ToM Predictor}
\label{appx:tom}

The Theory of Mind (ToM) predictor is a neural network $\psi_{u\rightarrow v}$  that aims to predict the ego embedding $h^{ego}_v$, representing the internal state/intention, of another arm $v$, based on the shared observation embedding $h_u^{con}$.

\begin{figure}[htp]
\centering
\includegraphics[width=5in]{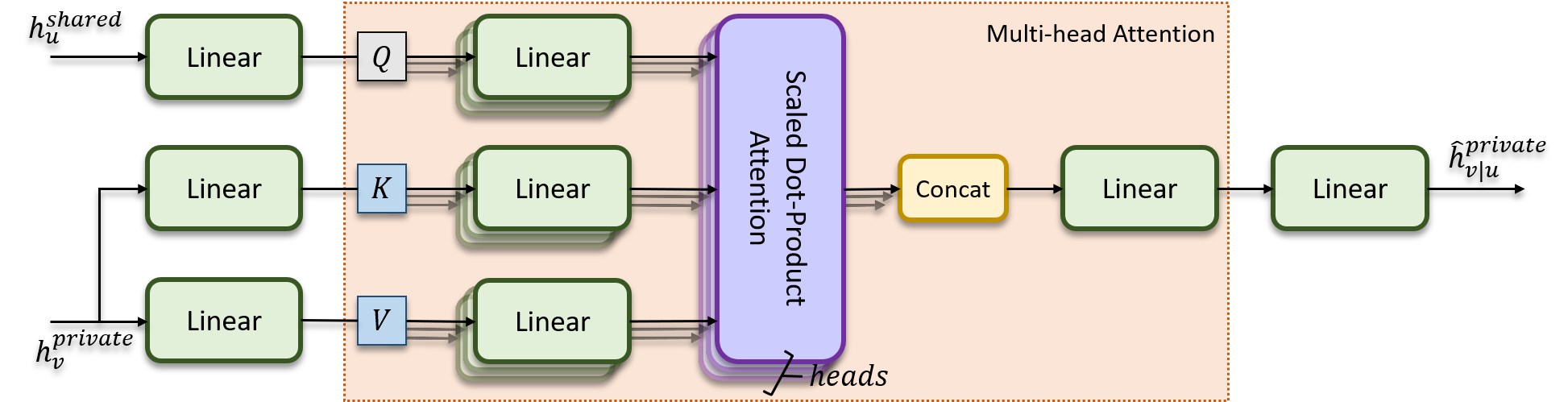}
\vspace{-0.1cm}
\caption{ToM Predictor.}
\label{fig:tom_predictor}
\vspace{-0.4cm}
\end{figure}

As illustrated in Fig~\ref{fig:tom_predictor}, we design the ToM predictor using a multi-head attention architecture. 
In this design, the consensus embedding of the ego arm $h_u^{con}$ serves as the query, while the ego embedding of $v$ acts as both the key and value. 
This design reflects the core idea of Theory of Mind: an agent (the query) actively reasons about another agent’s internal state by attending to observable cues (the key-value pairs). 
Moreover, this attention-based structure is inherently scalable and can be naturally extended to settings where the ego arm interacts with multiple neighboring arms.
Since the roles and tasks of each arm may differ significantly, we do not share parameters between ToM predictors. 
Instead, each direction $u\rightarrow v$ and $v\rightarrow u$ is modeled with its own independent network $\psi_{u\rightarrow v}$ and $\psi_{v\rightarrow u}$.

\subsection{Confidence Predictor}
\label{appx:conf}

In Section \ref{subsec:mt}, we introduce a directional consensus mechanism, in which each arm learns a confidence score $c_{u/v}\in [0,1]$ to assess the credibility of its embedding $h_{u/v}^{con}\in\mathbb{R}^{d_{con}}$ for alignment and collaboration within a multi-arm system.
To achieve this, we design a lightweight confidence predictor. 
This predictor follows a shared trunk with head-specific bias architecture, aimed at sharing the underlying environmental representation while allowing each arm to maintain independent confidence judgments.
The 
Specifically, the confidence module consists of a shared multi-layer feature extractor followed by agent-specific prediction heads:
\begin{equation}
    z_{u/v}=ReLU(W_2ReLU(LN(W_1h_{u/v}^{con}))), c_{u/v}=\sigma(W_{u/v}z_{u/v})
\end{equation}
where $W_1$, $W_2$ are the fully connected layers shared across arms, and $W_{u/v}$ is specific to each arm. 
$LN$ is the LayerNorm operator. $\sigma(\cdot)$ is the Sigmoid function.

\section{Proof}
\label{appx:proof}

Equation \ref{eq:sl} serves as an optional, classic bidirectional consistency synchronization operator provided during the model inference stage. 
For clarity and ease of reading in this section, we omit the superscript $con$ and change the number of iterations $t$ from subscript to superscript. 
For example, we use $h_u^t$ to denote $h_{u,t}^{con}$ from Equation \ref{eq:sl}. 
The clear form is given as follows:
\begin{equation}\label{eq:slc}
    \begin{aligned}
        \begin{bmatrix}
        h_u^{(t+1)} \\ h_v^{(t+1)}
        \end{bmatrix} = \underbrace{ 
        \begin{bmatrix}
        1 - \eta & \eta \\
        \eta & 1 - \eta
        \end{bmatrix}}_{\text{consistency operator}}
        \cdot
        \begin{bmatrix}
        h_u^{(t)} \\ h_v^{(t)}
        \end{bmatrix}.
    \end{aligned}
\end{equation}

\begin{theorem}
    Equation \ref{eq:slc} is equivalent to performing a low-order sheaf Laplacian step on the two embeddings.
\end{theorem}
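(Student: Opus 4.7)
The plan is to show that the matrix $\begin{pmatrix} 1-\eta & \eta \\ \eta & 1-\eta \end{pmatrix}$ coincides with $I - \eta L_{\mathcal{F}}$, where $L_{\mathcal{F}}$ is the sheaf Laplacian of $\mathcal{F}$ restricted to the single edge $e=(u,v)$, and then interpret the resulting recursion as one explicit Euler step of the sheaf heat flow $\dot h = -L_{\mathcal{F}} h$.

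First I would recall from the setup in Section~\ref{subsec:mt} that the cellular sheaf $\mathcal{F}$ assigns $\mathcal{F}(u)=\mathcal{F}(v)=\mathcal{F}(e)=\mathbb{R}^{d_{con}}$, and that on the edge $e$ the restriction maps $\rho_{u\to e},\rho_{v\to e}$ are the identities (as stated just after the definition of $\mathcal{L}_{nc}$). Next I would write out the coboundary $\delta:C^0(\mathcal{G},\mathcal{F})\to C^1(\mathcal{G},\mathcal{F})$ acting on a $0$-cochain $(h_u,h_v)$ by $\delta(h_u,h_v)=\rho_{v\to e}(h_v)-\rho_{u\to e}(h_u)=h_v-h_u$, which is consistent with the first-order cohomology penalty in Equation~\ref{eq:mse}. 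Then the sheaf Laplacian $L_{\mathcal{F}}=\delta^\top\delta$ is, in block form,
\begin{equation*}
L_{\mathcal{F}}=\begin{pmatrix} \rho_{u\to e}^\top \rho_{u\to e} & -\rho_{u\to e}^\top \rho_{v\to e} \\ -\rho_{v\to e}^\top \rho_{u\to e} & \rho_{v\to e}^\top \rho_{v\to e} \end{pmatrix}=\begin{pmatrix} I & -I \\ -I & I \end{pmatrix}.
\end{equation*}

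The key step is then to identify one explicit Euler (i.e.\ first-order / low-order) step of the sheaf heat flow with step size $\eta$,
\begin{equation*}
\begin{pmatrix} h_u^{(t+1)} \\ h_v^{(t+1)} \end{pmatrix}=\begin{pmatrix} h_u^{(t)} \\ h_v^{(t)} \end{pmatrix}-\eta\, L_{\mathcal{F}}\begin{pmatrix} h_u^{(t)} \\ h_v^{(t)} \end{pmatrix}=(I-\eta L_{\mathcal{F}})\begin{pmatrix} h_u^{(t)} \\ h_v^{(t)} \end{pmatrix},
\end{equation*}
with the update in Equation~\ref{eq:slc}. A direct computation gives
\begin{equation*}
I-\eta L_{\mathcal{F}}=\begin{pmatrix} (1-\eta)I & \eta I \\ \eta I & (1-\eta)I \end{pmatrix},
\end{equation*}
which, after suppressing the identity blocks, is exactly the consistency operator in Equation~\ref{eq:slc}. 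This establishes the equivalence.

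The only non-routine point, and the one I would be careful to spell out, is justifying that this is the correct sheaf Laplacian for the system at hand: namely, that the blockwise formula $L_{\mathcal{F}}=\delta^\top\delta$ reduces to the $2\times 2$ block form above because the restriction maps on the single edge $e=(u,v)$ are the identity, as assumed in Section~\ref{subsec:mt}. Once that identification is in place, the rest is a one-line matrix verification, and the recursion of Equation~\ref{eq:slc} is recognized as a low-order (first-order explicit Euler) sheaf Laplacian step, as claimed.
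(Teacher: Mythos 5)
Your proposal is correct and follows essentially the same route as the paper: reduce to identity restriction maps, compute the sheaf Laplacian's action on the two stalks, and verify that the damped update $(I-\eta L_{\mathcal{F}})$ reproduces the consistency operator of Equation \ref{eq:slc}. Your derivation of $L_{\mathcal{F}}=\delta^\top\delta$ from the coboundary is in fact slightly more standard than the paper's aggregate-update formula (which uses $\rho^{-1}_{u\rightarrow e}$), but the two coincide under the identity restriction maps, so the arguments are substantively identical.
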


\begin{proof}
    For an edge $e=(u,v)\in\mathcal{E}$, if these two nodes $u$ and $v$ have their own embedding $h_u$ and $h_v$, and have and restriction maps like:
    \begin{equation}
        \begin{aligned}
            \rho_{u\rightarrow e}:\mathcal{F}(u)\rightarrow\mathcal{F}(e),~\rho_{v\rightarrow e}:\mathcal{F}(v)\rightarrow\mathcal{F}(e)
        \end{aligned}
    \end{equation}
    According to sheaf theory, Then the consistency error on edge $e$ can be represented as:
    \begin{equation}
        \begin{aligned}
            e_{uv}=\rho_{u\rightarrow e}(h_u)-\rho_{v\rightarrow e}(h_v)
        \end{aligned}
    \end{equation}
    The aggregate update of the Sheaf Laplacian for node $u$ is: 
    \begin{equation}\label{eq:delta}
        \begin{aligned}
            \Delta_{\mathcal{F}}h_u=\sum_{v\in\mathcal{N}(u)}\rho^{-1}_{u\rightarrow e}(\rho_{u\rightarrow e}(h_u)-\rho_{v\rightarrow e}(h_v))
        \end{aligned}
    \end{equation} 
    which is the combination of structural consistency deviations.
    When the restriction map is identity map (i.e. $\rho_{u/v\rightarrow e}=I$), Equation \ref{eq:delta} can be simplified as:
    \begin{equation}
        \begin{aligned}
            \Delta_{\mathcal{F}}h_u=\sum_{v\in\mathcal{N}(u)}(h_u-h_v),
        \end{aligned}
    \end{equation}
    which is one of the most common low-order form of the combinatorial graph Laplacian applied to the sheaf.
    For the dual-arm system with two nodes in the graph, the above equation can be written as:
    \begin{equation}
        \begin{aligned}
            \Delta_{\mathcal{F}}h_u=h_u-h_v,~\Delta_{\mathcal{F}}h_v=h_v-h_u=-\Delta_{\mathcal{F}}h_u
        \end{aligned}
    \end{equation}
    The Sheaf Laplacian then becomes:
    \begin{equation}
        \begin{aligned}
            h_u^{(t+1)}&=h_u^{(t)}-\eta\cdot\Delta_{\mathcal{F}}h_u=h_u^{(t)}-\eta(h_u^{(t)}-h_v^{(t)})\\
            h_v^{(t+1)}&=h_v^{(t)}-\eta\cdot\Delta_{\mathcal{F}}h_v=h_u^{(t)}-\eta(h_v^{(t)}-h_u^{(t)})
        \end{aligned}
    \end{equation}
    By reorganizing the above equations into matrix form, we obtain the classic bidirectional consistency synchronization operator, as shown in Equations \ref{eq:sl} and \ref{eq:slc}.
\end{proof}

According to the above proof, we can claim that the bidirectional consistency synchronization operator is a special case of sheaf Laplacian operator.

\end{document}